\documentclass[10pt]{article}
\PassOptionsToPackage{dvipsnames}{xcolor}

\AtBeginDocument{%
    \abovedisplayskip=8pt plus 2pt minus 4pt
    \belowdisplayskip=8pt plus 2pt minus 4pt
    \abovedisplayshortskip=2pt plus 2pt
    \belowdisplayshortskip=4pt plus 2pt minus 2pt
}

\usepackage[accepted]{rlj}

\usepackage{amssymb}
\usepackage{amsthm}
\usepackage{mathtools}
\usepackage{mathrsfs}
\usepackage{graphicx}
\usepackage{wrapfig}
\usepackage{subcaption}
\usepackage{url}
\usepackage{booktabs}
\usepackage{longtable}
\usepackage{array}
\usepackage{microtype}
\usepackage{aliascnt}
\usepackage[capitalise,noabbrev]{cleveref}

\renewcommand{\mid}{\,\vert\,}

\theoremstyle{plain}
\newtheorem{axiom}{Axiom}
\crefname{axiom}{Axiom}{Axioms}

\newtheorem{theorem}{Theorem}[section]

\newaliascnt{lemma}{theorem}
\newtheorem{lemma}[lemma]{Lemma}
\aliascntresetthe{lemma}
\crefname{lemma}{Lemma}{Lemmas}

\newaliascnt{proposition}{theorem}
\newtheorem{proposition}[proposition]{Proposition}
\aliascntresetthe{proposition}
\crefname{proposition}{Proposition}{Propositions}

\newaliascnt{corollary}{theorem}
\newtheorem{corollary}[corollary]{Corollary}
\aliascntresetthe{corollary}
\crefname{corollary}{Corollary}{Corollaries}

\theoremstyle{definition}
\newaliascnt{definition}{theorem}

\aliascntresetthe{definition}
\crefname{definition}{Definition}{Definitions}

\theoremstyle{remark}

\title{Rationalizing Boltzmann Rationality: An Axiomatic\\ Characterization of Entropy-Regularized Policies}

\setrunningtitle{Rationalizing Boltzmann Rationality}

\author{Silviu Pitis}

\emails{spitis@umich.edu}

\affiliations{
University of Michigan
}

\contribution{
    An axiomatic characterization (\cref{thm:boltzmann}): by restricting VNM Independence to base prospects (deterministic policies with environmental randomness), we show that IIA and monotonicity---applied to both policy and value at choice nodes---uniquely determine the Boltzmann policy, its entropy-regularized representation, and the soft Bellman equation.
    }
    {
    The softmax form has been derived through multiple independent axiom systems (\cref{sec:convergence}), but none provides a complete axiom-to-algorithm path in the MDP setting, where the RL literature has adopted softmax without axiomatic justification. The base-prospect domain restriction, the axiom decomposition into policy and value components, and the derivation of the entropy bonus (main proof, Step~2) are new.
    }

\contribution{
    RL-specific consequences: under generalized discounting with amplifying actions, a minimum rationality threshold $\beta_0$ emerges below which the soft Bellman equation has no finite fixed point---which is resolved by the horizon continuity axiom of prior work (\cref{prop:diverge,prop:horizon}). Actual return is non-decreasing in $\beta$ (\cref{prop:monotonicity}). A numerical example across the Tsallis entropy family illustrates that only Shannon entropy satisfies IIA.
    }
    {
    Prior work derives $\gamma(s,a)$ from preferences and shows that policy evaluation under amplifying actions is well-posed given horizon continuity \citep{pitis2019rethinking}; the interaction with entropy regularization was unexplored. The $\beta_0$ threshold and its resolution via horizon continuity are new; the return monotonicity result is standard, and included for completeness.
    }

\contribution{
    A cross-disciplinary synthesis connecting economics and information-theory axiomatics (Luce, Yellott, Shore--Johnson, McFadden, Fudenberg--Iijima--Strzalecki, Rust, et al.) to RL foundations, with a normative assessment of when IIA is appropriate for agent design.
    }
    {
    The foundational results are sparsely cited in the RL literature. The synthesis and normative assessment may serve as a useful reference for the RL community.
    }

\keywords{Axiomatic RL, Decision Theory, Stochastic Policies, Independence of Irrelevant Alternatives, Entropy Regularization, Boltzmann Rationality.}

\summary{
The softmax policy $\pi(a \mid s) \propto \exp(\beta Q(s,a))$ is the default model of stochastic choice in reinforcement learning (RL). Various justifications based on robustness, exploration, and optimization have been offered in the RL literature, but none uniquely derives the softmax form from first principles. This leaves a basic tension unresolved: the entropy bonus in the soft Bellman equation violates the Independence axiom that underwrites the Markov decision process (MDP) reward structure.
We dissolve this tension by distinguishing two kinds of randomness: chance and choice.
By restricting von Neumann--Morgenstern (VNM) Independence to environmental lotteries over base prospects, we show that imposing independence of irrelevant alternatives (IIA) and monotonicity on the policy and value functions at choice nodes uniquely determines the Boltzmann policy, the entropy-regularized representation, and the soft Bellman equation.
The choice between the soft and hard Bellman equations thus reduces to a design decision: whether the agent values its own ability to choose.
We develop RL-specific consequences, including return monotonicity and convergence under generalized discounting, and synthesize the independent lines from economics and information theory that arrive at the same structure, offering a normative assessment of when IIA is appropriate for agent design.
}

\begin{document}

\makeCover
\maketitle

\begin{abstract}
The softmax policy $\pi(a \mid s) \propto \exp(\beta Q(s,a))$ is the default model of stochastic choice in reinforcement learning (RL). Various justifications based on robustness, exploration, and optimization have been offered in the RL literature, but none uniquely derives the softmax form from first principles. This leaves a basic tension unresolved: the entropy bonus in the soft Bellman equation violates the Independence axiom that underwrites the Markov decision process (MDP) reward structure.
We dissolve this tension by distinguishing two kinds of randomness: chance and choice.
By restricting von Neumann--Morgenstern (VNM) Independence to environmental lotteries over base prospects, we show that imposing independence of irrelevant alternatives (IIA) and monotonicity on the policy and value functions at choice nodes uniquely determines the Boltzmann policy, the entropy-regularized representation, and the soft Bellman equation.
The choice between the soft and hard Bellman equations thus reduces to a design decision: whether the agent values its own ability to choose.
We develop RL-specific consequences, including return monotonicity and convergence under generalized discounting, and synthesize the independent lines from economics and information theory that arrive at the same structure, offering a normative assessment of when IIA is appropriate for agent design.
\end{abstract}

\section{Introduction}
\label{sec:introduction}

Boltzmann rationality---the softmax policy $\pi(a \mid s) \propto \exp({\beta Q(s,a)})$---is the default model of stochastic choice in reinforcement learning (RL). 
It underlies Boltzmann exploration \citep{sutton2018reinforcement}, soft actor-critic and entropy-regularized RL \citep{haarnoja2018soft}, the Bradley-Terry preference model in reinforcement learning from human feedback (RLHF) \citep{bradley1952rank}, maximum-entropy inverse RL \citep{ziebart2010modeling}, and human behavior modeling \citep{laidlaw2022boltzmann}.
The RL community has offered various justifications for this choice---translation invariance \citep{sutton2018reinforcement}, robustness \citep{eysenbach2022maximum}, exploration and optimization smoothing \citep{ahmed2019understanding}, the control-as-inference framework \citep{levine2018reinforcement}---but none that derives the softmax form from normative first principles and addresses the basic tension between stochastic choice and expected utility theory.

This paper resolves this tension by distinguishing chance from choice: Von Neumann-Morgenstern (VNM) Independence applies to environmental lotteries (chance), while Independence of Irrelevant Alternatives (IIA) and monotonicity govern behavioral selection (choice). We show that together, these axioms uniquely determine the Boltzmann policy, its entropy-regularized representation, and the soft Bellman equation. Our formulation embeds the result in infinite-horizon MDPs with state-action-dependent discounting $\gamma(s,a)$ \citep{white2017UnifyingTS}, and develops the RL-specific implications.

The axiomatic characterization matters because it justifies softmax as a design choice. The core axiom, IIA, in its familiar form, is a consistency requirement on the policy: an agent's relative preference between two actions should not depend on what other actions are available.
For example, the availability of an irrelevant action distorts relative preferences under $\varepsilon$-greedy but not under Boltzmann.
When this property is appropriate (e.g., actions are independent), the Boltzmann policy is not merely mathematically or empirically convenient, but uniquely justified. When IIA breaks down (e.g., redundant or hierarchically structured actions), the axiomatization pinpoints the failure mode rather than leaving it as an empirical surprise.
The entropy bonus has a concrete interpretation in this framework: it is the option premium---the value of being able to choose an action versus having it be externally imposed.

A second, value-level application of IIA prices this premium:  Value IIA (\cref{ax:iia}b) requires an action's choice probability to be a menu-independent function of the loss in value caused by removing that action. Under monotonicity, it can equivalently be read backward: removing actions chosen with the same probability must cause the same loss in menu value. With the remaining axioms, this forces the premium to be exactly the policy's entropy, $\tfrac{1}{\beta}H(\pi)$: large when choice is spread over comparable actions, approaching zero as one action dominates. Value IIA is, to our knowledge, new; it is also a strong assumption, doing much of the work in our characterization (it drives Step~2 of the main proof). \Cref{sec:alternative} therefore examines it from several angles, including an equivalent (given other axioms) \emph{marginal consistency} condition, $\partial V/\partial Q(s,a) = \pi(a \mid s)$.

\textbf{Contributions.}

\begin{enumerate}
\item
  An axiomatic characterization (\Cref{thm:boltzmann}): a domain restriction to base prospects reconciles VNM expected utility with entropy bonuses by separating environmental randomization from behavioral selection. 
  Within this framework, IIA and monotonicity axioms uniquely determine the Boltzmann policy, entropy-regularized representation, and soft Bellman equation, yielding a complete axiom-to-algorithm path in the RL setting. 
  Step~1 is classical \citep{luce1959individual}. Step~2 provides a novel derivation of the entropy bonus from Value IIA by comparing a menu's cardinal value before and after an action is removed. \Cref{sec:alternative} presents two alternative axiomatizations.
\item
  RL-specific discussion and consequences: under generalized discounting with $\gamma(s,a) > 1$, a minimum rationality threshold emerges below which the soft Bellman equation diverges, resolved by the horizon continuity axiom of prior work (\cref{prop:diverge,prop:horizon}). Actual return is non-decreasing in $\beta$ (\cref{prop:monotonicity}).
\item
  A cross-disciplinary synthesis connecting economics and information-theory axiomatics to RL foundations (SAC, control-as-inference, RLHF), with a normative assessment of when IIA is and is not appropriate for agent design.
\end{enumerate}

\section{Chance, Choice, and the Entropy Bonus}\label{sec:motivation}

In fully observed MDPs, an optimal deterministic policy always exists, and the value of any stochastic policy is an interpolation of its deterministic components---precisely what VNM rationality prescribes. And yet the practical advantages of stochastic policies are well established: under partial observability, stochastic stationary policies can strictly dominate deterministic ones \citep{singh1994learning}; stochasticity aids exploration; and entropy regularization improves learning speed and robustness (\cref{sec:introduction}).
A normative approach to stochastic choice should resolve the conflict with expected utility theory, and justify the choice of stochasticity (why softmax?). Existing justifications do neither. Indeed, any convex regularizer yields a valid soft Bellman equation \citep{geist2019theory}, and alternatives to Shannon entropy \citep{lee2018sparse,nachum2018tsallis} have been actively explored.

To see why such an approach requires care, consider that VNM's Independence axiom requires the value of any lottery to equal the weighted average of its components, which implies the \emph{compound lottery reduction} (CLR): $V(s) = \sum_a \pi(a \mid s)\, Q(s,a)$. The soft Bellman equation adds an entropy bonus $\frac{1}{\beta}H(\pi)$ that strictly exceeds this, violating Independence at every choice node. The incompatibility between softmax policies and CLR is plain: adding an option to the choice set can \emph{decrease} the agent's value. The question is not whether bounded agents deviate from the VNM prescription---they do---but whether a principled account of the entropy bonus exists within expected utility theory.

Our resolution rests on a distinction between two kinds of randomness in an MDP that the standard formalism conflates. At a state transition, or \emph{chance node}, the outcome is imposed on the agent, and Independence is the natural principle: the value of the lottery equals the weighted average of its components.
\looseness=-1 But at a \emph{choice node}, the agent has something more than a lottery: it has \emph{options}, which preserve the ability to adapt and can be worth more than any fixed lottery over their outcomes \citep{kreps1979representation}. In a continual learning setting \citep{abel2023definition}, for example, an agent at a choice node can adapt its behavior to current estimates, whereas one facing an externally imposed lottery cannot. 

The entropy bonus captures this option premium: the value of being at a decision node where the agent selects, versus having an outcome externally imposed.
Consider a stochastic policy and a lottery that produce identical distributions over outcomes: the agent should weakly prefer the stochastic policy, because being the source of randomness preserves the ability to deviate. \Cref{sec:axioms} formalizes this by restricting VNM Independence to environmental lotteries over base prospects and imposing IIA and monotonicity on both the policy and value function at choice nodes.

\section{From Preferences to the Soft Bellman Equation}
\label{sec:axioms}

\subsection{Preferences over Base Prospects}
\label{sec:baseprospects}

We work with finite, Markovian Decision Processes (MDPs), where $\mathcal{S}$ is a finite state space, $\mathcal{A}$ is a finite action space with $\vert\mathcal{A}\vert \geq 3$ (with two actions, IIA is trivially satisfied), and $P(s' \mid s, a)$ gives transition probabilities. The reward function $R: \mathcal{S} \times \mathcal{A} \to \mathbb{R}$ and state-action-dependent discount function $\gamma: \mathcal{S} \times \mathcal{A} \to [0, \bar{\gamma}]$ are representations of the agent's preferences, and are derived from axioms as discussed below. We write $H(\pi) = -\sum_a \pi_a \log \pi_a$ for entropy, $\text{softmax}(\beta q)_a = \exp({\beta q_a}) / \sum_{a'} \exp({\beta q_{a'}})$, and $\text{LSE}(q) = \log \sum_a \exp({q_a})$ for the log-sum-exp.

As previewed in \cref{sec:motivation}, we resolve the CLR incompatibility by restricting the scope of standard rationality axioms to environmental randomization. A \textbf{base prospect} is a process $p = (s, \Pi)$ with a deterministic, possibly non-Markovian policy $\Pi$, where all randomness is sourced from $P(\cdot\mid s,a)$. A \textbf{lottery} $\tilde p$ over base prospects is an external randomization over deterministic plans, selected by nature prior to execution. 
This is the paper's central modeling choice: the scope of Independence determines whether entropy bonuses are a violation or a consequence of the axioms.

\begin{axiom}[VNM Expected Utility]
\label{ax:vnm}
Preferences $\succ$ over lotteries of base prospects satisfy asymmetry, negative transitivity, independence, and continuity. Equivalently, there exists a utility $V$ such that for lotteries
$\tilde{p}, \tilde{q}$ over base prospects, $\tilde{p} \succ \tilde{q}$ if and only if $\mathbb{E}_{\tilde{p}}[V] > \mathbb{E}_{\tilde{q}}[V]$, with $V$ unique up to positive affine transformation. \citep{vonneumann1947theory,kreps1988notes}
\end{axiom}

\begin{axiom}[Dynamic Consistency (DC)]
\label{ax:dc}
Let $\Pi$ and $\Omega$ be arbitrary deterministic (possibly non-Markovian) policies, and let $a\Pi$ be the policy that first takes action $a$ and follows policy $\Pi$ thereafter. Then $(s, a\Pi) \succ (s, a\Omega)$ if and only if $\mathbb{E}_{s' \sim P(\cdot|s,a)}[V(s', \Pi)] >
\mathbb{E}_{s' \sim P(\cdot|s,a)}[V(s', \Omega)]$.
\end{axiom}
\vspace{-0.2em}

\Cref{ax:vnm,ax:dc} yield the generalized Bellman relation: there exist $R: \mathcal{S} \times \mathcal{A} \to \mathbb{R}$ and $\gamma: \mathcal{S} \times \mathcal{A} \to \mathbb{R}^+$ such that for all base prospects $(s,a\Pi)$, $V(s, a\Pi) = R(s,a) + \gamma(s,a)\,\mathbb{E}_{s'}[V(s', \Pi)]$ \citep[Theorem~3]{pitis2019rethinking}. As base prospects involve only environmental lotteries, the proof applies unchanged. 

Since $R(s,a)$ and $\gamma(s,a)$ are independent of the continuation policy, we define Q-values as $Q(s,a) \coloneqq R(s,a) + \gamma(s,a)\,\mathbb{E}_{s'}[V(s')]$, where $V(s')$ is the value of being in state $s'$. This is well defined for base prospects with deterministic continuation policies, but left undetermined for stochastic policies. It is this freedom that allows us to reconcile the soft Bellman equation with expected utility theory.

Here we assume that the Bellman relation extends from deterministic continuation policies to successor choice nodes: nature's lottery $P(\cdot \mid s,a)$ is aggregated affinely using the same $(R, \gamma)$, with each successor assigned the choice-node value $V(s')$ characterized below. Without this extension, \cref{ax:vnm,ax:dc} determine the Bellman relation only for deterministic continuations. The extension preserves the chance/choice distinction because affine aggregation applies to the environmental transition, not the agent's randomization within a choice node.

\subsection{Two Axioms for Stochastic Choice}

At each state $s$, given Q-values $Q(s, \cdot)$, the agent must select an action. We impose two axioms on a single choice rule $\pi: \mathbb{R}^{|\mathcal{A}|} \to \Delta(\mathcal{A})$, defined for every Q-vector and applied uniformly across states, and on the value function $V$, where $\Delta(\mathcal{A}) = \{p \in \mathbb{R}^{|\mathcal{A}|}_{\geq 0} : \sum_a p_a = 1\}$. We write $V(\mathcal{B})$ for the value of the choice node restricted to actions $\mathcal{B} \subseteq \mathcal{A}$, obtained by taking $q_n \to -\infty$ for $n \notin \mathcal{B}$; in particular, $V(\{a\}) = q_a$ by \cref{ax:vnm}.\footnote{We assume these limits exist and are finite.\label{fn:vblimits}} Similarly, $\pi^{\mathcal{B}}$ denotes the choice rule at the restricted node, with $\pi^{\mathcal{B}}_a$ the probability of selecting $a \in \mathcal{B}$.

\vspace{0.2\baselineskip}
\begin{axiom}[Independence of Irrelevant Alternatives (IIA)]
\label{ax:iia}\
\vspace{-0.2\baselineskip}
\begin{enumerate}[label=(\alph*),itemsep=-2pt]
  \item \emph{Policy IIA.} The ratio $\pi(a \mid q) / \pi(b \mid q)$
        depends only on $q_a - q_b$.
  \item \emph{Value IIA.} For $|\mathcal{B}| \geq 2$, the probability
        $\pi_a^{\mathcal{B}}$ depends only on
        $V(\mathcal{B}) - V(\mathcal{B}\setminus\{a\})$.
\end{enumerate}
\end{axiom}
\vspace{-0.2em}

Part~(a) is a translation-invariant strengthening of Luce's IIA \citep{luce1959individual}, justified by the cardinal scale of \cref{ax:vnm}. Part~(b) links choice probabilities to option values: how often the agent selects an action is determined by that action's marginal contribution to the value of the menu. This is a coherence condition---under~(a), removing an action with selection probability $\pi_a$ always rescales the remaining probabilities by $1/(1-\pi_a)$, so two removals with the same $\pi_a$ have identical effects on choice; (b) requires that such removals also have identical effects on value. In this Axiom, ``depends only on'' means through a single function, independent of the action pair in~(a) and of the menu and its size in~(b). IIA is compelling when actions are genuinely independent, but fails for similar or correlated actions (see \cref{sec:normative}).

\vspace{0.2\baselineskip}
\begin{axiom}[Monotonicity]
\label{ax:mono}\
\vspace{-0.2\baselineskip}
\begin{enumerate}[label=(\alph*),itemsep=-2pt]
\item \emph{Policy monotonicity.} If $q_a > q_b$, then $\pi(a \mid q) > \pi(b \mid q)$.
\item \emph{Value monotonicity.} If $q'_a \geq q_a$ for all $a \in \mathcal{A}$, then $V(q') \geq V(q)$.
\end{enumerate}
\end{axiom}
\vspace{-0.2em}

Part~(a) is a minimal rationality condition: better actions receive higher probability. Part~(b) requires that improving any option cannot decrease the value of the choice node---it implies \emph{menu monotonicity} ($V(\mathcal{A}) \geq V(\mathcal{B})$ for $\mathcal{B} \subseteq \mathcal{A}$), formalizing the design principle that options have non-negative value. Together they provide the regularity needed to determine the policy form and value function from \cref{ax:iia}: (a) makes the ratio function continuous (Step~1); (b) ensures that higher marginal value corresponds to higher choice probability, making the dependence in \cref{ax:iia}(b) invertible (Step~2). 

\paragraph{Remark (Prescriptive use).}\label{rem:prescriptive}
In contrast to \citet{luce1959individual}, who seeks to model human choice, we use \cref{ax:iia,ax:mono} \emph{prescriptively} to constrain what an agent \emph{should} do at choice nodes.

\subsection{Main result}
\label{sec:mainresult}

\begin{theorem}[Boltzmann Representation]
\label{thm:boltzmann}
Let an agent in a finite MDP with $\bar{\gamma} < 1$ and $\vert\mathcal{A}\vert \geq 3$ satisfy VNM preferences over base prospects and DC (\cref{ax:vnm,ax:dc}), with the resulting Bellman relation extended to successor choice nodes as described above. The agent's choice rule and value function satisfy IIA (\cref{ax:iia}) and Monotonicity (\cref{ax:mono}) if and only if there exists $\beta > 0$ s.t.:
\begin{itemize}[leftmargin=2.5em]
\item[\textbf{(i)}]
$\pi(a \mid s) = {\exp(\beta Q(s,a))}/{\sum_{a'} \exp(\beta Q(s,a'))}$
\hfill {\small (Boltzmann policy)}

\item[\textbf{(ii)}]
$\pi(\cdot \mid s) = \arg\max_{\hat\pi \in \Delta(\mathcal{A})} \left[\sum_a \hat\pi_a\, Q(s,a) + \frac{1}{\beta} H(\hat\pi)\right]$
\hfill {\small (entropy representation)}

\item[\textbf{(iii)}] $V(s) = \tfrac{1}{\beta}\mathrm{LSE}(\beta Q(s,\cdot))$, $\;\; Q(s,a) = R(s,a) + \gamma(s,a)\,\mathbb{E}[V(s')]$
\hfill {\small (soft Bellman equation)}
\end{itemize}
For each fixed $\beta>0$, the system (i)--(iii) has a unique solution, denoted $(V^\beta, Q^\beta, \pi^\beta)$.
\end{theorem}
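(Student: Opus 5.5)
The proof splits into sufficiency and necessity; the necessity direction runs in three steps, followed by existence and uniqueness of the fixed point.

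\textbf{Sufficiency.} If (i)--(iii) hold, Policy IIA is immediate: $\pi_a/\pi_b=\exp(\beta(q_a-q_b))$. Policy monotonicity follows because $\exp$ is strictly increasing. Value monotonicity holds because $\mathrm{LSE}$ is coordinatewise nondecreasing. For Value IIA, sending $q_a\to-\infty$ gives
\[
V(\mathcal{B})-V(\mathcal{B}\setminus\{a\})=-\tfrac1\beta\log(1-\pi^{\mathcal{B}}_a),
\]
so $\pi^{\mathcal{B}}_a=1-\exp(-\beta\Delta)$, a single function independent of the menu. The equivalence of (i) and (ii) is the standard Gibbs variational identity, obtained from the Lagrangian of the strictly concave objective. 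Its optimal value is $\frac1\beta\mathrm{LSE}(\beta q)$, which ties (ii) to (iii).

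\textbf{Step 1 (policy form).} Write $\pi_a/\pi_b=f(q_a-q_b)$. Applying this to three distinct actions (possible since $|\mathcal{A}|\ge3$) gives the Cauchy equation $f(x+y)=f(x)f(y)$ with $f>0$. Policy monotonicity makes $f$ strictly increasing, hence measurable, so $f(x)=e^{\beta x}$ with $\beta>0$. This is Luce's classical argument, and normalization yields (i) for every Q-vector, including restricted menus.

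\textbf{Step 2 (value form, the main obstacle).} Let $\pi^{\mathcal{B}}_a=g(\Delta_a)$, where $\Delta_a=V(\mathcal{B})-V(\mathcal{B}\setminus\{a\})$. First I would show $g$ is strictly increasing, hence invertible, using value monotonicity. Fix the other coordinates and raise $q_a$. Then $\pi_a$ increases strictly by Step~1, while $V(\mathcal{B}\setminus\{a\})$ does not change; so $V(\mathcal{B})$ must move weakly up, and it cannot stay put unless $g$ is degenerate, which contradicts strict variation of $\pi_a$.

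Next, take two-element menus $\{a,b\}$ with $V(\{a\})=q_a$. Then $g\bigl(V(\{a,b\})-q_b\bigr)=\sigma(\beta(q_a-q_b))$ and $g\bigl(V(\{a,b\})-q_a\bigr)=\sigma(\beta(q_b-q_a))$, where $\sigma$ is the logistic function. Put $u=V(\{a,b\})-q_b$, $w=V(\{a,b\})-q_a$, and $h=g^{-1}$. Then $u-w=q_a-q_b$ and $p+(1-p)=1$, so
\[
h(p)-h(1-p)=\tfrac1\beta\log\frac{p}{1-p}.
\]
This constraint alone does not pin down $h$. I would therefore use three-element menus and the removal-rescaling identity: removing $a$ rescales the remaining probabilities by $1/(1-\pi_a)$. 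Telescoping removals in two different orders forces $h(p)=-\frac1\beta\log(1-p)+c$. The constant $c$ is fixed by the two-element relation, or by $V(\{a\})=q_a$ when removing down to a singleton, which gives $c=0$.

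Once $h$ is known, induct on menu size. We have $V(\mathcal{B})=V(\mathcal{B}\setminus\{a\})-\frac1\beta\log(1-\pi_a)$, and the inductive hypothesis gives $V(\mathcal{B}\setminus\{a\})=\frac1\beta\mathrm{LSE}$. Together these yield $V=\frac1\beta\mathrm{LSE}(\beta q)$, which equals $\sum_a\pi_aq_a+\frac1\beta H(\pi)$. I expect the delicate part to be deriving the functional form of $h$ from the menu-independence of $g$ without extra regularity. If the order-of-removal argument is insufficient, the fallback is to exploit freedom in $q$ to show $h(p)+\frac1\beta\log(1-p)$ is constant, using strict monotonicity to obtain measurability.

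\textbf{Step 3 (fixed point).} With the extended Bellman relation, $Q=R+\gamma PV$ and $V=\frac1\beta\mathrm{LSE}(\beta Q)$. Because $\mathrm{LSE}$ is 1-Lipschitz in $\|\cdot\|_\infty$, the soft Bellman operator is a $\bar\gamma$-contraction when $\bar\gamma<1$. Banach's fixed-point theorem then gives a unique $(V^\beta,Q^\beta)$, and $\pi^\beta$ is determined by (i).
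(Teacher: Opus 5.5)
Your overall structure matches the paper's: the sufficiency check, Luce's Cauchy argument, two- and three-action equations for $g^{-1}$, telescoping, and contraction. Step~2, however, has a genuine gap. Solving the functional equations is the decisive step, and you assert its result without proof. You also assign the two constraints the wrong roles.

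The three-element path-independence equation
\[
h\!\left(\tfrac{\pi_2}{1-\pi_3}\right)+h(\pi_3)=h\!\left(\tfrac{\pi_3}{1-\pi_2}\right)+h(\pi_2)
\]
does not force $h(p)=-\tfrac1\beta\log(1-p)+c$. Every $h(p)=-k\log(1-p)$ with $k>0$ is strictly increasing, vanishes at $0$, and solves it, since both sides equal $-k\log(1-\pi_2-\pi_3)$. So only the two-element relation can pin $k=1/\beta$. Conversely, the two-element relation $h(p)-h(1-p)=\tfrac1\beta\log\tfrac{p}{1-p}$ cannot fix $c$, because constants cancel. The value $c=0$ must come from the submenu limit: $\Delta_a\to0$ as $q_a\to-\infty$, i.e.\ $h(0^+)=0$.

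Your fallback also fails if it uses only the three-element equation plus measurability. With $\phi(p):=h(p)+\tfrac1\beta\log(1-p)$, that equation becomes a homogeneous one with nonconstant smooth solutions such as $\phi(p)=\log(1-p)$, which even satisfies $\phi(0)=0$.

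The missing argument, which the paper supplies, uses all three constraints on $\phi$ together:
\begin{itemize}
\item The two-element relation becomes the symmetry $\phi(p)=\phi(1-p)$.
\item The logarithms cancel in the three-element equation, leaving $\phi(\tfrac{p}{1-t})+\phi(t)=\phi(\tfrac{t}{1-p})+\phi(p)$ for $p+t<1$.
\item The submenu limit gives $\phi(0^+)=0$, and symmetry then gives $\phi(1^-)=0$.
\end{itemize}
Regularity is then derived rather than assumed. Since $h$ is monotone, $\phi$ has only upward jumps. Symmetry turns an upward jump at $p$ into a downward jump at $1-p$, so there are no jumps and $\phi$ is continuous on $[0,1]$.

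An extremal argument finishes the proof. Suppose $\phi$ attains a maximum $M>0$ at $p_0$. Setting $t=p_0$ and $p=p_0(1-p_0)$ makes the left side $2M$, which forces $\phi(p_0(1-p_0))=M$. Iterating along $p_{n+1}=p_n(1-p_n)\to0$ then contradicts $\phi(0)=0$. The same argument applies to the minimum, so $\phi\equiv0$ and $h(p)=-\tfrac1\beta\log(1-p)$.

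Without this argument or an equivalent one, your induction to $\tfrac1\beta\mathrm{LSE}$ rests on an unproven formula for $h$. The rest of your proof (sufficiency, Step~1, Step~3) is fine and coincides with the paper's.
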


\noindent The assumption $\bar{\gamma} < 1$ is relaxed in \cref{sec:generalized}. $\vert\mathcal{A}\vert \geq 3$ is essential: with two actions, any monotone function of $q_a - q_b$ satisfies Policy IIA; the functional equation in Step~2 also requires three actions. 

\begin{proof}
For the ``if'' direction: (i) gives both Policy IIA and Policy monotonicity directly; (iii) gives Value monotonicity since LSE is componentwise non-decreasing; and (i)+(iii) give Value IIA, since $\pi_a = 1 - \exp(-\beta(V(\mathcal{B}) - V(\mathcal{B}\setminus\{a\})))$. We handle the ``only if'' direction in three steps. 

\textbf{Step 1 (Boltzmann policy).} By \cref{ax:iia}(a), $\pi(a \mid q)/\pi(b \mid q) = \psi(q_a - q_b)$ for some function $\psi$. With $|\mathcal{A}| \geq 3$, ratio consistency $\left(\frac{\pi(a)}{\pi(b)} \cdot \frac{\pi(b)}{\pi(c)} = \frac{\pi(a)}{\pi(c)}\right)$ gives $\psi(d_1)\,\psi(d_2) = \psi(d_1 + d_2)$, which is Cauchy's multiplicative equation. Since $\psi$ is strictly increasing by \cref{ax:mono}(a),  it must be exponential \citep{aczel1966lectures}: $\psi(d) = \exp({\beta d})$ for $\beta > 0$, giving $\pi(a \mid q) \propto \exp({\beta q_a})$. This establishes~(i), with full support since $\exp(\beta q_a) > 0$. Luce applies this same argument to justify the logarithmic Fechnerian scale \citep[Chapter 3]{luce1959individual}, crediting \citet{adams1957approach} for the technical proof.

\textbf{Step 2 (Entropy representation).} By \cref{ax:iia}(b), $\pi_n = g(V(\mathcal{B}) - V(\mathcal{B}\setminus\{n\}))$ for some function $g$. Since the marginal value is non-decreasing in $q_n$ (\cref{ax:mono}(b)) and $\pi_n$ is increasing in $q_n$ (by the softmax form from Step~1), $g$ is increasing. Writing $f = g^{-1}$, the marginal value of action $n$ is $f(\pi_n)$, with $f$ non-decreasing and $f(0) = 0$. We derive two functional equations that pin down $f$.

\emph{Two actions.} Removing each action from $\{1,2\}$ gives $V(\{1,2\}) = q_1 + f(\pi_2) = q_2 + f(\pi_1)$, so $f(\pi_1) - f(\pi_2) = q_1 - q_2$. Substituting $q_1 - q_2 = \tfrac{1}{\beta}\log(\pi_1/\pi_2)$ from Step~1, with $\pi_2 = 1-\pi_1$:
\begin{equation}
f(p) - f(1-p) = \tfrac{1}{\beta}\log\tfrac{p}{1-p}. \label{eq:two-action}
\end{equation}

\emph{Three actions.} Policy IIA implies that removing action $k$ rescales remaining probabilities by $1/(1-\pi_k)$: for instance, $\pi_2^{\{1,2\}} = \pi_2/(1-\pi_3)$. Building $V(\{1,2,3\})$ from $V(\{1\}) = q_1$ by adding 2 then 3 gives $q_1 + f(\pi_2/(1-\pi_3)) + f(\pi_3)$; adding 3 then 2 gives $q_1 + f(\pi_3/(1-\pi_2)) + f(\pi_2)$. Equating:
\begin{equation}
f\!\left(\tfrac{\pi_2}{1-\pi_3}\right) + f(\pi_3) = f\!\left(\tfrac{\pi_3}{1-\pi_2}\right) + f(\pi_2) \quad \text{for all } \pi_2, \pi_3 \in (0,1),\; \pi_2 + \pi_3 < 1. \label{eq:star}
\end{equation}

\emph{Solution.} Define $h(p) := f(p) + \tfrac{1}{\beta}\log(1-p)$. Then \eqref{eq:two-action} gives $h(p) = h(1-p)$, and $f(0) = 0$ gives $h(0) = 0$.
The logarithmic terms cancel in~\eqref{eq:star}, leaving
\begin{equation}
h\!\left(\tfrac{p}{1-t}\right) + h(t) = h\!\left(\tfrac{t}{1-p}\right) + h(p) \quad \text{for all } p, t \in (0,1),\; p + t < 1.  \label{eq:h}
\end{equation}
We show $h \equiv 0$ by proving that any nonzero value would propagate to the boundary, contradicting $h(0) = 0$. Since $h$ differs from $f$ by the continuous function $\tfrac{1}{\beta}\log(1-p)$, $h$ inherits any jump discontinuities from $f$, whose jump sizes are non-negative (\cref{ax:mono}). But since $h(p) = h(1-p)$, a jump at $p$ forces a jump at $1-p$ with opposite sign, from which it follows that $h$ is continuous on $(0,1)$. At the boundary, the assumed submenu limits (footnote~\ref{fn:vblimits}) give $f(p)\to0$ as $p\downarrow0$, so $h(p)\to0=h(0)$. By symmetry, $h(p)\to0$ as $p\uparrow1$, and defining $h(1)=0$ makes $h$ continuous on $[0,1]$. 
By the Extreme Value Theorem, $h$ attains a maximum $M \geq 0$ (since $h(0) = 0$). If $M > 0$, the maximum occurs at some $p_0 \in (0,1)$. Setting $p = p_0(1-p_0)$ and $t = p_0$ in \eqref{eq:h} gives $p/(1-t) = p_0$, so the left side equals $2M$; since each right-side term is at most $M$, both equal $M$, giving $h(p_0(1-p_0)) = M$. Iterating, $p_{n+1} = p_n(1-p_n) \to 0$ with $h(p_n) = M$ for all $n$, so $M=\lim_n h(p_n)=h(0)=0$---contradiction. Hence $h \leq 0$; the same argument for the minimum gives $h \geq 0$, so $h \equiv 0$ and $f(p) = -\tfrac{1}{\beta}\log(1-p)$.

Building up the menu one action at a time, $V(\{1,\ldots,n\}) = q_1 + \sum_{k=2}^n f(\pi_k^{\{1,\ldots,k\}})$. By Policy IIA, restricting to a sub-menu rescales probabilities proportionally, so $\pi_k^{\{1,\ldots,k\}} = \pi_k/(\pi_1+\cdots+\pi_k)$. With $f(p) = -\tfrac{1}{\beta}\log(1-p)$, each term becomes $-\tfrac{1}{\beta}\log\tfrac{\pi_1+\cdots+\pi_{k-1}}{\pi_1+\cdots+\pi_k}$, which telescopes:
\[V(\{1,\ldots,n\}) = q_1 - \tfrac{1}{\beta}\log\pi_1 = \tfrac{1}{\beta}\log\textstyle\sum_a \exp({\beta q_a}) = \tfrac{1}{\beta}\mathrm{LSE}(\beta q).\]
For the Boltzmann policy, this equals $\sum_a \pi_a q_a + \tfrac{1}{\beta}H(\pi)$ (substituting $q_a = \tfrac{1}{\beta}\log\pi_a + \tfrac{1}{\beta}\log Z$ from Step~1, where $Z = \sum_a e^{\beta q_a}$). Thus, the entropy bonus emerges from IIA and monotonicity without assuming any additive optimization structure. Since $\tfrac{1}{\beta}\text{LSE}(\beta q) = \max_{\hat\pi \in \Delta(\mathcal{A})} [\sum_a \hat\pi_a q_a + \tfrac{1}{\beta}H(\hat\pi)]$ with the Boltzmann policy as the unique maximizer, the entropy-regularized objective also provides the soft policy evaluation $V^{\hat\pi}(s) = \sum_a \hat\pi_a Q^{\hat\pi}(s,a) + \tfrac{1}{\beta}H(\hat\pi(\cdot|s))$ used in actor-critic methods.

\textbf{Remark.} \citet[Section~2]{fudenberg2015stochastic} prove the related result that IIA selects Shannon entropy among all strictly concave additive terms. Our derivation does not assume the additive form.

\textbf{Step 3 (Soft Bellman equation).} 
Substituting $V = \tfrac{1}{\beta}\mathrm{LSE}(\beta Q)$ from Step~2 into the Bellman relation, $Q(s,a) = R(s,a) + \gamma(s,a)\,\mathbb{E}[V(s')]$, yields:
\[Q(s,a) = R(s,a) + \gamma(s,a)\,\mathbb{E}_{s'}\!\left[ \tfrac{1}{\beta}\log\textstyle\sum_{a'} \exp({\beta Q(s',a')})\right].\]
The Bellman relation supplies the recursive \emph{structure} ($Q=R+\gamma\,\mathbb{E}[V]$); \cref{ax:iia,ax:mono} supply the \emph{closure} ($V=\tfrac{1}{\beta}\mathrm{LSE}(\beta Q)$).
The resulting fixed-point problem has a unique solution $(V^\beta, Q^\beta, \pi^\beta)$ when $\bar{\gamma} < 1$ (\cref{lem:contraction}).
\end{proof}\vspace{-0.2em}

The two axiom pairs thus govern two distinct kinds of randomness: \cref{ax:vnm,ax:dc} govern \emph{chance} (environmental lotteries over base prospects); \cref{ax:iia,ax:mono} govern \emph{choice} (behavioral selection at decision nodes). Under compound-lottery reduction ($V=\sum_a\pi_aQ_a$), Value IIA fails. For equal-Q menus, removing any action leaves the menu value unchanged ($\Delta V=0$), while its choice probability $1/|\mathcal B|$ varies with menu size, so choice probability is not a function of marginal value. The hard Bellman equation is different: away from ties, it obeys the Value-IIA relation $\pi_a^{\mathcal B}=\mathbf{1}\{\Delta V>0\}$, but it is excluded by strict Policy monotonicity, since distinct suboptimal actions both receive probability zero. The choice between the soft and hard systems is a design decision about whether the agent values its own ability to choose.

\begin{corollary}[Marginal Consistency]
\label{cor:mc}
$\partial V/\partial q_a = \pi(a \mid s)$.
\end{corollary}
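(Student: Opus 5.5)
Since \cref{thm:boltzmann} pins down the value at every choice node as $V(q) = \tfrac{1}{\beta}\mathrm{LSE}(\beta q)$ and the policy as $\pi(\cdot \mid q) = \mathrm{softmax}(\beta q)$, the plan is to differentiate the closed form directly. The partial derivatives are taken with respect to the local Q-vector $q = Q(s,\cdot)$ at a fixed choice node, holding the other components fixed. This is the sense in which $V$ is a function of $q$ in \cref{ax:mono}(b) and in Step~2 of the main proof.

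With $Z = \sum_{a'} \exp(\beta q_{a'})$, we have $V = \tfrac{1}{\beta}\log Z$. The chain rule gives
\[
\frac{\partial V}{\partial q_a} = \frac{1}{\beta}\cdot\frac{\beta\exp(\beta q_a)}{Z} = \frac{\exp(\beta q_a)}{Z},
\]
and this is exactly $\pi(a \mid s)$ by part (i) of \cref{thm:boltzmann}. LSE is smooth on all of $\mathbb{R}^{|\mathcal{A}|}$, so the derivative exists everywhere and no boundary or tie cases arise. This contrasts with the hard max, where the derivative fails to exist at ties.

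As a cross-check, and to connect with part (ii), I would also note the envelope-theorem route. We have $V(q) = \max_{\hat\pi \in \Delta(\mathcal{A})}[\sum_a \hat\pi_a q_a + \tfrac{1}{\beta}H(\hat\pi)]$, and the objective is linear in $q$ with a unique maximizer $\pi$. Danskin's theorem then gives $\nabla_q V = \pi$ without computing the derivative explicitly. This second route also shows why the identity is called marginal consistency: a marginal improvement in $q_a$ raises the node value at the rate at which $a$ is chosen, just as compound-lottery reduction would give if $\pi$ were held fixed. The entropy term contributes nothing at first order because $\pi$ is optimal.

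I do not expect a genuine obstacle. The only point needing care is to state clearly that the derivative is the local one, with $q$ treated as free. It is not the total derivative through the soft Bellman fixed point of Step~3, where changing $Q(s,a)$ would feed back into successor values. If the recursive version were wanted, one would add the standard observation that $\bar\gamma<1$ makes the fixed-point map a contraction (\cref{lem:contraction}), so the implicit function theorem applies. That is beyond what the corollary asserts.
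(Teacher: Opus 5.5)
Your proposal is correct and matches the paper's argument: the paper also differentiates the closed form $V=\tfrac{1}{\beta}\log\sum_b e^{\beta q_b}$ from \cref{thm:boltzmann}(iii) and identifies the result as the softmax policy of part (i). Your envelope-theorem (Danskin) cross-check is sound, as is your clarification that the derivative is the local one at a choice node rather than one propagated through the Bellman fixed point, but the direct computation alone already suffices.
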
\vspace{-0.2em}

This follows directly: $\partial V/\partial q_a = \partial_a \tfrac{1}{\beta}\log\sum_b e^{\beta q_b} = \pi(a \mid q)$. Contrast with the deterministic case: $\partial V/\partial q_a = \mathbf{1}\{a = a^*\}$ concentrates all sensitivity on the best action, while the soft value distributes sensitivity smoothly according to the Boltzmann policy.

The following standard result completes the formal development:

\begin{lemma}[Soft Bellman Contraction]
\label{lem:contraction}
For $\gamma(s,a) \leq \bar{\gamma} < 1$, the soft Bellman operator $(\mathcal{T}V)(s) = \frac{1}{\beta}\log\sum_a \exp\!\big(\beta[R(s,a) + \gamma(s,a)\sum_{s'}P(s'|s,a)V(s')]\big)$ is a $\bar{\gamma}$-contraction in $\|\cdot\|_\infty$.
\end{lemma}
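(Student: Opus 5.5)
The plan is the standard two-stage argument: show that the log-sum-exp aggregator is non-expansive in the sup norm, then absorb the discount. Fix two bounded value functions $V, W$ on $\mathcal{S}$ and a state $s$. Write
\[
q_a = R(s,a) + \gamma(s,a)\textstyle\sum_{s'}P(s'\mid s,a)V(s'),
\]
and define $r_a$ analogously with $W$ in place of $V$. Then $(\mathcal{T}V)(s) = \tfrac{1}{\beta}\mathrm{LSE}(\beta q)$ and $(\mathcal{T}W)(s) = \tfrac{1}{\beta}\mathrm{LSE}(\beta r)$. The reward terms cancel in each difference, so
\[
|q_a - r_a| = \gamma(s,a)\,\Big|\textstyle\sum_{s'}P(s'\mid s,a)\big(V(s')-W(s')\big)\Big| \le \bar\gamma\,\|V-W\|_\infty .
\]
Here I use $\gamma(s,a)\in[0,\bar\gamma]$ and that $P(\cdot\mid s,a)$ is a probability distribution.

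The key step is the inequality
\[
\big|\tfrac{1}{\beta}\mathrm{LSE}(\beta q) - \tfrac{1}{\beta}\mathrm{LSE}(\beta r)\big| \le \max_a |q_a - r_a| .
\]
I would prove it directly from two elementary properties, avoiding any differentiation.
\begin{itemize}
\item \emph{Monotonicity.} The map $q \mapsto \tfrac{1}{\beta}\mathrm{LSE}(\beta q)$ is componentwise non-decreasing; this is already noted in the ``if'' direction of \cref{thm:boltzmann}.
\item \emph{Translation equivariance.} For any constant $c$, $\tfrac{1}{\beta}\mathrm{LSE}(\beta(q + c\mathbf{1})) = c + \tfrac{1}{\beta}\mathrm{LSE}(\beta q)$, since $e^{\beta c}$ factors out of the sum.
\end{itemize}
With $\delta = \max_a|q_a-r_a|$ we have $r - \delta\mathbf{1} \le q \le r+\delta\mathbf{1}$ componentwise. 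Applying monotonicity and then equivariance sandwiches $\tfrac{1}{\beta}\mathrm{LSE}(\beta q)$ between $\tfrac{1}{\beta}\mathrm{LSE}(\beta r) \pm \delta$.

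As a cross-check, one can apply the mean value theorem along the segment from $r$ to $q$ and use \cref{cor:mc}. The gradient of $\tfrac1\beta\mathrm{LSE}(\beta\,\cdot)$ is a probability vector, so the difference equals $\sum_a \pi_a(\xi)(q_a-r_a)$, which is bounded by $\delta$.

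Combining the two bounds gives $|(\mathcal{T}V)(s)-(\mathcal{T}W)(s)| \le \bar\gamma\|V-W\|_\infty$ for every $s$. Taking the supremum over the finite set $\mathcal{S}$ yields the contraction. By Banach's fixed point theorem this gives a unique fixed point $V^\beta$, which is what Step~3 of \cref{thm:boltzmann} requires; $Q^\beta$ and $\pi^\beta$ then follow from (iii) and (i).

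I do not expect a real obstacle. The only points needing care are that $\gamma$ depends on $(s,a)$, so the bound must be taken per action before the outer aggregation (the uniform bound $\bar\gamma$ handles this), and that $\beta>0$ is used in the translation-equivariance step.
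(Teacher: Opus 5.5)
Your proposal is correct and takes the same route as the paper: non-expansiveness of the log-sum-exp in $\|\cdot\|_\infty$, then bounding each action's discount by $\bar{\gamma}$. The only difference is that you justify the non-expansiveness via monotonicity and translation equivariance, whereas the paper simply asserts it.
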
\vspace{-0.8em}
\begin{proof}
The log-sum-exp is non-expansive: $|\text{LSE}(x) - \text{LSE}(y)| \leq \|x-y\|_\infty$. Therefore $|(\mathcal{T}V_1)(s) - (\mathcal{T}V_2)(s)| \leq \max_a \gamma(s,a)\|V_1 - V_2\|_\infty \leq \bar{\gamma}\|V_1 - V_2\|_\infty$.
\end{proof}

\subsection{Alternative axiomatizations}
\label{sec:alternative}

Value IIA (\cref{ax:iia}b) carries much of the normative weight in \cref{thm:boltzmann} and drives Step~2 of the proof. Read forward, Value IIA says that the probability of selecting an action depends only on the loss in menu value caused by removing it, not on the composition of the menu producing that loss. Under monotonicity it can equivalently be read backward. Policy IIA implies that removals with the same $\pi_a$ have the same effect on choice. Each rescales the remaining probabilities by $1/(1-\pi_a)$. Value IIA requires them to have the same effect on value. Without such a condition, the policy and value function remain structurally decoupled. Related work links menu evaluation and choice probabilities qualitatively \citep{ahn2013preference,fudenberg2015dynamic}, but does not make choice probability a function of an option's cardinal marginal value.

The same coupling can be imposed in two alternative ways: directly through marginal consistency or structurally through an additive form.

\textbf{Route 1: Marginal consistency.} Marginal consistency (\cref{cor:mc}), $\partial V/\partial q_a=\pi_a$, directly implies Value Monotonicity because $\pi_a>0$. The submenu limit and Step~1 softmax policy give $\Delta V_a=\int_{-\infty}^{q_a}\pi_a^{\mathcal{B}}(x,q_{-a})\,dx=-\tfrac{1}{\beta}\log(1-\pi_a^{\mathcal{B}})$. Equivalently, $\pi_a^{\mathcal{B}}=1-e^{-\beta\Delta V_a}$, which is Value IIA. The two value packages are therefore equivalent given the remaining assumptions.

\textbf{Route 2: Assume the additive form.} Suppose instead that menu value has the additive perturbed utility (APU) form of \citet{fudenberg2015stochastic}, $V = \max_{\hat\pi \in \Delta(\mathcal{A})} [\hat\pi \cdot Q - \sum_a c(\hat\pi_a)]$. This is a separable instance of the regularized-MDP framework \citep{geist2019theory}. Given the Step~1 softmax policy, the logit selection result of \citet[Theorem~2]{fudenberg2015stochastic} applies on our full Q-vector domain. Together with the fixed cardinal scale and singleton normalization, it reduces the additive representation to the entropy regularizer $c(p)=\tfrac{1}{\beta}p\log p$, thereby recovering the same representation.

Thus, given the remaining assumptions, Value IIA (plus Value Monotonicity), marginal consistency, and the additive form are interchangeable formulations of the same policy--value coupling. We retain the qualitative route in \cref{thm:boltzmann} because it states the policy--value commitment directly at the menu level without presupposing differentiability or an additive structure.

\section{Consequences for Reinforcement Learning}
\label{sec:consequences}

We develop three consequences of the axiomatic framework: convergence of the soft Bellman equation under generalized discounting (\cref{sec:generalized}), monotonicity of actual return in~$\beta$ (\cref{sec:performance}), and a numerical illustration that IIA selects Shannon entropy within the Tsallis family (\cref{sec:tsallis}).

\subsection{Generalized discounting and the soft Bellman equation}
\label{sec:generalized}

\Cref{thm:boltzmann} assumes $\bar{\gamma} < 1$. We now relax this. \citet{white2017UnifyingTS} established weighted-norm contraction results for Bellman operators with transition-based $\gamma(s,a,s') \in [0,1]$. \citet{pitis2019rethinking} derived $\gamma(s,a)$ from preferences, allowing $\gamma(s,a) > 1$ for certain amplifying actions (e.g., investments, compounding effects), and showed convergence of stationary-policy evaluation under a \emph{horizon continuity} axiom and a spanning condition.

Extending this analysis to entropy-regularized policies reveals a new phenomenon. Because Boltzmann policies have full support, amplifying actions always receive positive weight---even when the deterministic optimum avoids them entirely. This creates a potential failure mode:

\begin{proposition}[Soft Bellman may diverge given amplifying actions]
\label{prop:diverge}
There exist MDPs where the deterministic optimal value is finite but the
soft Bellman equation has no finite fixed point for sufficiently low $\beta$.
\end{proposition}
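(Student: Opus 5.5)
We will prove \cref{prop:diverge} with an explicit one-state construction in which the soft Bellman fixed-point equation reduces to a scalar equation that can be solved in closed form. Take a single state $s$ and $|\mathcal{A}| = k+1 \geq 3$ actions. There are $k \geq 2$ \emph{safe} actions with $R(s,a) = 0$ and $\gamma(s,a) = 0$. There is one \emph{amplifying} action $a^\dagger$ with $R(s,a^\dagger) = -c < 0$ and $\gamma(s,a^\dagger) = 2$, which self-loops to $s$.

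First we compute the deterministic optimal value. Any deterministic plan that eventually takes a safe action collects a finite non-positive total, and taking a safe action immediately yields $0$. Taking $a^\dagger$ forever yields $-c(1+2+4+\cdots) = -\infty$. Hence the deterministic optimum is $V^*(s) = 0$, which is finite, and the optimal plan avoids $a^\dagger$ entirely.

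Second, we write the soft Bellman equation from \cref{thm:boltzmann}(iii) as a scalar equation. Each safe action has $Q = 0$, and $Q(s,a^\dagger) = -c + 2V$. A finite fixed point $V$ must therefore satisfy
\[
e^{\beta V} = k + e^{-\beta c}\, e^{2\beta V}.
\]
Substituting $y = e^{\beta V} > 0$ and $\varepsilon = e^{-\beta c} \in (0,1)$, this becomes the quadratic
\[
\varepsilon y^2 - y + k = 0.
\]
This quadratic has a real root exactly when $1 - 4k\varepsilon \geq 0$. Both roots are then positive, since their product is $k/\varepsilon > 0$ and their sum is $1/\varepsilon > 0$. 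The condition $1 - 4k\varepsilon \geq 0$ is equivalent to $\beta \geq \beta_0 := \log(4k)/c$.

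Third, we conclude. For $\beta < \beta_0$ the quadratic has no real root, so no finite $V$ satisfies the soft Bellman equation, and correspondingly no finite $Q$ does either. This holds even though the deterministic optimal value is $0$. This matches the full-support mechanism described just before the proposition: the Boltzmann policy assigns positive weight to $a^\dagger$, and the compounding factor $\gamma = 2$ then overwhelms the penalty $-c$ once $\beta$ is small enough. For completeness we can note that for $\beta \geq \beta_0$ finite fixed points do exist, which identifies $\beta_0$ as the threshold.

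The only delicate point is the interpretation of the deterministic optimum when $\gamma > 1$. We must check that the $-\infty$ value of always choosing $a^\dagger$ does not undermine the claim "deterministic optimal value is finite." This is handled by defining the optimum as the supremum over deterministic plans, which equals $0$. Beyond that, the argument is a routine discriminant computation, and the key design choice is the penalty $c>0$: since $\beta$ multiplies the rewards, a nonzero reward is what makes the scalar equation depend on $\beta$ at all.
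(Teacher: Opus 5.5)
Your proof is correct and takes essentially the paper's approach: a single-state MDP with a safe action and a self-looping amplifying action with $\gamma=2$, where the soft Bellman equation reduces to a scalar equation that has no real solution below an explicit threshold $\beta_0$. The paper instead uses one safe action with $R=10$, $\gamma=0.5$, which yields a cubic in $e^{\beta V/2}$ and $\beta_0=\tfrac15\ln\tfrac{27}{4}$; your terminal safe actions give a quadratic settled by its discriminant, and taking $k\ge 2$ also respects the paper's standing assumption $|\mathcal{A}|\ge 3$.
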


\begin{wrapfigure}{r}{0.40\textwidth}
\vspace{-1.5em}
\centering
\includegraphics[width=0.4\textwidth]{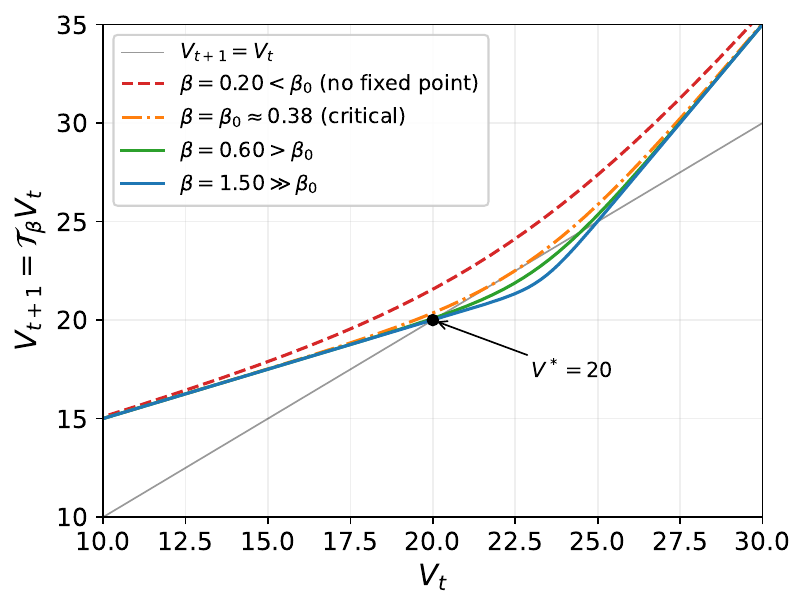}
\vspace{-2.5em}
\end{wrapfigure}
\noindent\textit{Proof.}
Consider a single-state MDP with two actions, \emph{safe} ($R = 10$,
$\gamma = 0.5$) and \emph{risky} ($R = -25$, $\gamma = 2$).
The deterministic optimum avoids \emph{risky} and $V^* = 20$. For the soft Bellman operator, however, we have $f(V) = \tfrac{1}{\beta}\text{LSE}(\beta(10+0.5V),\, \beta(-25+2V))$ and there is a critical threshold $\beta_0 = \frac{1}{5}\ln\frac{27}{4} \approx 0.382$, such that for
$\beta < \beta_0$, $f(V) > V$ everywhere and no fixed point exists (see figure). For $\beta > \beta_0$,
a stable fixed point exists near the deterministic optimum. For $\beta < \beta_0$, the operator lies strictly above
the identity:
amplifying actions receive enough weight to prevent convergence. $\beta_0$ thus defines a minimum rationality threshold. \qed

\vspace{0.5em}
\textbf{Resolution via horizon continuity.}\looseness=-1{} The axiomatic framework of \citet{pitis2019rethinking} derives $\gamma(s,a)$ from preferences. Its \emph{horizon continuity} axiom, which says that the far future eventually ceases to matter under any policy, implies $\rho(\Gamma^\pi T^\pi) < 1$ for all stationary $\pi$, provided the statewise value vectors induced by policies span $\mathbb{R}^{|\mathcal{S}|}$ \citep[Theorem~4]{pitis2019rethinking}, where $\rho$ denotes spectral radius and $(\Gamma^\pi T^\pi)(s,s') = \sum_a \pi(a|s)\,\gamma(s,a)\,P(s'|s,a)$ is the discount-weighted transition matrix. Since this holds for every Boltzmann policy, the $\beta_0$ threshold is eliminated:

\begin{proposition}[Horizon continuity resolves $\beta_0$]
\label{prop:horizon}
If $\rho(\Gamma^\pi T^\pi) < 1$ for all stationary $\pi$ (as implied by
horizon continuity with finite $|\mathcal{S}|$ and the spanning condition above), then the soft Bellman
equation has a unique fixed point for all $\beta > 0$, and soft policy
iteration converges to it.
\end{proposition}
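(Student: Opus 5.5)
The plan is to reduce the soft Bellman equation to a family of linear policy-evaluation problems, one per stationary policy, each of which is well-posed under the hypothesis $\rho(\Gamma^\pi T^\pi)<1$.

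\textbf{Soft policy evaluation.} For a fixed stationary $\pi$, soft policy evaluation reads $V = r^\pi_\beta + \Gamma^\pi T^\pi V$, where
\[
r^\pi_\beta(s)=\textstyle\sum_a \pi(a|s)R(s,a)+\tfrac{1}{\beta}H(\pi(\cdot|s)).
\]
Because $\rho(\Gamma^\pi T^\pi)<1$, the Neumann series converges and gives the unique solution $V^\pi_\beta=(I-\Gamma^\pi T^\pi)^{-1}r^\pi_\beta$. The matrix $\Gamma^\pi T^\pi$ is entrywise nonnegative, so $(I-\Gamma^\pi T^\pi)^{-1}=\sum_k(\Gamma^\pi T^\pi)^k$ is also nonnegative. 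This gives monotonicity: if $V\ge r^\pi_\beta+\Gamma^\pi T^\pi V$, then $V\ge V^\pi_\beta$, and similarly with the inequalities reversed.

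\textbf{Soft policy iteration.} Start from any $\pi_0$, say uniform. At step $k$, evaluate $V_k=V^{\pi_k}_\beta$ and set $\pi_{k+1}=\mathrm{softmax}(\beta Q_k)$, where $Q_k=R+\gamma P V_k$. By the variational identity from Step~2 of \cref{thm:boltzmann}, $\mathcal{T}V_k\ge r^{\pi_k}_\beta+\Gamma^{\pi_k}T^{\pi_k}V_k=V_k$, and $\mathcal{T}V_k=r^{\pi_{k+1}}_\beta+\Gamma^{\pi_{k+1}}T^{\pi_{k+1}}V_k$. Applying the resolvent monotonicity for $\pi_{k+1}$ then yields $V_{k+1}\ge V_k$.

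\textbf{Upper bound.} We need $\sup_\pi \|V^\pi_\beta\|_\infty<\infty$. The rewards $r^\pi_\beta$ are bounded by $\max|R|+\tfrac1\beta\log|\mathcal{A}|$. The map $\pi\mapsto\rho(\Gamma^\pi T^\pi)$ is continuous on the compact set of stationary policies, which is a product of simplices. Hence $\rho^\ast:=\max_\pi\rho(\Gamma^\pi T^\pi)<1$, and $(I-\Gamma^\pi T^\pi)^{-1}$ depends continuously on $\pi$, so its norm is uniformly bounded. This gives a uniform bound on $V^\pi_\beta$.

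\textbf{Existence of a fixed point.} The sequence $V_k$ is monotone and bounded, so it converges to some $V_\infty$. Passing to the limit, using continuity of softmax and of the evaluation map, gives $V_\infty=\mathcal{T}V_\infty$. This also establishes convergence of soft policy iteration.

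\textbf{Uniqueness.} Suppose $V,W$ are two fixed points, with Boltzmann policies $\pi_V,\pi_W$. By the variational characterization,
\[
V=\mathcal{T}V\ge r^{\pi_W}_\beta+\Gamma^{\pi_W}T^{\pi_W}V,\qquad W=r^{\pi_W}_\beta+\Gamma^{\pi_W}T^{\pi_W}W.
\]
Subtracting gives $V-W\ge \Gamma^{\pi_W}T^{\pi_W}(V-W)$, so $(I-\Gamma^{\pi_W}T^{\pi_W})(V-W)\ge0$, and nonnegativity of the resolvent gives $V\ge W$. The symmetric argument gives $W\ge V$, so the fixed point is unique.

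\textbf{Main obstacle.} The hardest step is the uniform bound. Pointwise $\rho<1$ does not by itself give a single contraction constant, and the argument hinges on continuity of the spectral radius and compactness of the policy set. A possible alternative is to build a weighted sup-norm in which $\mathcal{T}$ contracts uniformly, for example from a Perron vector of a dominating matrix. However, the max over policies of $\Gamma^\pi T^\pi$ need not be a matrix of the form $\Gamma^\pi T^\pi$, so I would use the compactness/monotone route instead.
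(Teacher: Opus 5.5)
Your proposal is correct and follows essentially the same route as the paper's proof sketch. Both use soft policy iteration with the nonnegative Neumann-series resolvent $(I-\Gamma^\pi T^\pi)^{-1}$ in place of the contraction argument, compactness of the policy simplex to bound the resolvents uniformly, monotone convergence plus continuity for existence, and the resolvent comparison $V \ge V^{\pi_W}_\beta = W$ (and symmetrically) for uniqueness, while you additionally make explicit details the sketch leaves implicit, such as the reward bound $\max|R| + \tfrac{1}{\beta}\log|\mathcal{A}|$ and the identification $V_\infty = V^{\pi_\infty}_\beta$ with $\pi_\infty = \mathrm{softmax}(\beta Q_\infty)$.
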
\vspace{-0.5em}
\begin{proof}[Proof sketch]      
Let $V^\pi$ denote the entropy-regularized value of stationary policy $\pi$. The proof follows standard soft policy iteration, with the Neumann series $(I - \Gamma^\pi T^\pi)^{-1} = \sum_{k \geq 0}(\Gamma^\pi T^\pi)^k \geq 0$
(valid since $\Gamma^\pi T^\pi$ is entrywise non-negative and $\rho(\Gamma^\pi T^\pi) < 1$) replacing the usual contraction argument.
This non-negative inverse preserves the soft policy improvement inequality, giving $V^{\pi'} \geq V^\pi$, where $\pi'$ is the Boltzmann improvement of $\pi$.
Because $\pi \mapsto \Gamma^\pi T^\pi$ is continuous on the compact policy simplex and $\rho(\Gamma^\pi T^\pi) < 1$ throughout, the policy-evaluation resolvents are uniformly bounded. Together with bounded one-step regularized rewards, this bounds the monotone policy-value sequence, which therefore converges. Its policy-improvement residual vanishes in the limit, which is therefore a fixed point by continuity. 
For uniqueness, note that any fixed point $V$ satisfies $V \geq V^\pi$ for all $\pi$ by the same resolvent argument, and since $V = V^{\pi_V}$ for its induced Boltzmann policy, any two fixed points satisfy $V_1 \geq V_2 \geq V_1$.
\end{proof}

\subsection{Performance, sensitivity, and exploration}
\label{sec:performance}

Let $J(\pi, s) = \mathbb{E}_\pi[\sum_t (\prod_{k<t}\gamma(s_k,a_k))\, R(s_t, a_t) \mid s_0 = s]$ denote the actual (unregularized) expected return of policy $\pi$, let $\pi^*$ denote a deterministic optimal policy with respect to $J$, and recall that $\pi^\beta$ is the Boltzmann policy of \cref{thm:boltzmann}. 
The log-sum-exp satisfies $\max_a Q_a \leq \tfrac{1}{\beta}\text{LSE}(\beta Q) \leq \max_a Q_a + \frac{1}{\beta}\log|\mathcal{A}|$, giving a performance gap $J(\pi^*) - J(\pi^\beta) \leq \log|\mathcal{A}| / (\beta(1-\bar{\gamma}))$ that vanishes as $\beta \to \infty$ \citep[see][for sharp bounds]{muller2024sharp}. A natural question is whether higher $\beta$ always improves actual (not soft) return:

\begin{proposition}[Monotonicity of actual return]
\label{prop:monotonicity}
For any finite MDP with $\bar{\gamma} < 1$, the actual return $J(\pi^\beta, s)$ is
non-decreasing in $\beta$ for all $s$.
\end{proposition}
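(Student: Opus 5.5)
The plan is a revealed-preference (exchange) argument on the entropy-regularized objective, which avoids differentiating the fixed point in $\beta$. Write $\alpha = 1/\beta$. For a stationary policy $\pi$, define the discounted entropy
\[
\mathcal{H}(\pi,s) = \mathbb{E}_\pi\Big[\textstyle\sum_t \big(\prod_{k<t}\gamma(s_k,a_k)\big)\, H(\pi(\cdot\mid s_t)) \,\Big|\, s_0=s\Big].
\]
Unrolling the soft policy evaluation $V^{\hat\pi}(s) = \sum_a \hat\pi_a Q^{\hat\pi}(s,a) + \tfrac{1}{\beta}H(\hat\pi(\cdot\mid s))$ from Step~2 of \cref{thm:boltzmann} gives the decomposition
\[
V^{\pi}_\alpha(s) = J(\pi,s) + \alpha\,\mathcal{H}(\pi,s).
\]
The unrolling is a geometric series that converges because $\bar\gamma<1$ and all terms are bounded.

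The first key step is a statewise optimality claim. For every $s$ and every stationary $\pi$,
\[
V^{\pi^\beta}_\alpha(s) \;\geq\; V^{\pi}_\alpha(s).
\]
I would prove this by standard soft policy improvement. The fixed point $V^\beta$ of the contraction in \cref{lem:contraction} satisfies $V^\beta \geq \mathcal{T}^\pi_\alpha V^\beta$ componentwise, by the variational characterization (ii) of \cref{thm:boltzmann}: $\tfrac{1}{\beta}\mathrm{LSE}(\beta Q)$ is the maximum over $\hat\pi$ of $\sum_a \hat\pi_a Q_a + \tfrac1\beta H(\hat\pi)$. The soft evaluation operator $\mathcal{T}^\pi_\alpha$ is monotone and a $\bar\gamma$-contraction, so iterating gives $V^\beta \geq V^\pi_\alpha$, with equality at $\pi=\pi^\beta$. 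The fact that the same policy $\pi^\beta$ is optimal simultaneously from every start state is exactly what allows a pointwise-in-$s$ argument.

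The second step is the exchange inequality. Take $0<\alpha_1<\alpha_2$, i.e.\ $\beta_1>\beta_2$, and write $J_i = J(\pi^{\beta_i},s)$ and $\mathcal{H}_i = \mathcal{H}(\pi^{\beta_i},s)$. Optimality of each policy at its own temperature gives
\[
J_1+\alpha_1\mathcal{H}_1 \geq J_2+\alpha_1\mathcal{H}_2 \quad\text{and}\quad J_2+\alpha_2\mathcal{H}_2 \geq J_1+\alpha_2\mathcal{H}_1 .
\]
Adding the two inequalities yields $(\alpha_2-\alpha_1)(\mathcal{H}_2-\mathcal{H}_1)\geq 0$, so $\mathcal{H}_2\geq\mathcal{H}_1$. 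The first inequality then gives $J_1 - J_2 \geq \alpha_1(\mathcal{H}_2-\mathcal{H}_1)\geq 0$. Hence $J(\pi^{\beta},s)$ is non-decreasing in $\beta$ for every $s$. As a by-product, discounted entropy is non-increasing in $\beta$.

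I expect the main obstacle to be the first step rather than the algebra. The decomposition must hold with state-action-dependent discounting, so the entropy at step $t$ carries the weight $\prod_{k<t}\gamma(s_k,a_k)$ exactly as the reward does. The comparison class must also be rich enough, namely all stationary policies evaluated under the soft operator, while $\pi^\beta$ is optimal against it at every state simultaneously. If that statewise optimality is in place, the rest is a two-line convexity argument. It is the discrete analogue of the fact that $F(\alpha)=\max_\pi[J+\alpha\mathcal{H}]$ is convex, so that $J=F-\alpha F'$ satisfies $dJ/d\alpha = -\alpha F''\leq 0$. I would mention this analogue only as intuition, to avoid differentiability issues.
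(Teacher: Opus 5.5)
Your proposal is correct and follows the paper's proof: both add the two revealed-preference inequalities for $J + \alpha\,\bar{\mathcal{H}}$ to get $\bar{\mathcal{H}}(\pi_2)\ge\bar{\mathcal{H}}(\pi_1)$, then substitute back to get $J(\pi_1)\ge J(\pi_2)$. The only difference is that you prove the statewise optimality of $\pi^\beta$ by soft policy improvement (iterating $V^\beta \ge \mathcal{T}^\pi_\alpha V^\beta$ to obtain $V^\beta \ge V^\pi_\alpha$), whereas the paper only asserts it parenthetically.
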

\vspace{-1.5em}
\begin{proof}
Let $\alpha = 1/\beta$ and fix any start state, suppressed below ($\pi^\alpha$ is optimal from every start state at once). The Boltzmann policy maximizes
$\pi^\alpha = \arg\max_\pi [J(\pi) + \alpha\,\bar{\mathcal{H}}(\pi)]$,
where
$\bar{\mathcal{H}}(\pi) = \mathbb{E}_{\pi}[\sum_{t \geq 0} (\textstyle\prod_{k<t}\gamma(s_k,a_k))\, H(\pi(\cdot|s_t))]$, matching the discounting of $J$.
For $\alpha_1 < \alpha_2$ (i.e., $\beta_1 > \beta_2$), with $\pi_i=\pi^{\alpha_i}$, optimality of each policy at its own temperature gives
$J(\pi_1) - J(\pi_2) \geq \alpha_1[\bar{\mathcal{H}}(\pi_2) - \bar{\mathcal{H}}(\pi_1)]$ and $J(\pi_1) - J(\pi_2) \leq \alpha_2[\bar{\mathcal{H}}(\pi_2) - \bar{\mathcal{H}}(\pi_1)]$.
These imply $\bar{\mathcal{H}}(\pi_2) \geq \bar{\mathcal{H}}(\pi_1)$. 

Substituting back:
$J(\pi_1) - J(\pi_2) \geq \alpha_1 \cdot [\bar{\mathcal{H}}(\pi_2) - \bar{\mathcal{H}}(\pi_1)] \geq 0$.
The argument extends to any convex regularizer \citep[cf.][]{milgrom2002envelope}.
\end{proof}\vspace{-0.3em}

\textbf{Comparison of exploration strategies.} \Cref{tab:strategies} compares common exploration strategies against Policy IIA, monotonicity, and the additive perturbed utility (APU) property of \citet{fudenberg2015stochastic} (\cref{sec:alternative}). Of the listed strategies, only Boltzmann satisfies all properties. Mellowmax \citep{asadi2017alternative} induces a Boltzmann policy (satisfying Policy IIA), but its log-mean-exp value operator violates Value IIA. Tsallis/sparsemax policies \citep{lee2018sparse} violate IIA at the policy level, which the next subsection quantifies.

\begin{table}[htbp]
\caption{Axiomatic comparison of exploration strategies.}\vspace{-1.2em}
\label{tab:strategies}
\begin{center}
\begin{tabular}{lcccc}
\toprule
\textbf{Strategy} & \textbf{Policy IIA} & \textbf{Mono.} & \textbf{Full support} & \textbf{APU} \\
\midrule
Boltzmann (softmax) & $\checkmark$ & $\checkmark$ & $\checkmark$ & $\checkmark$ \\
$\varepsilon$-greedy & $\times$ & $\times$ & $\checkmark$ & $\times$ \\
Thompson sampling & $\times$ & $\checkmark^*$ & $\checkmark^*$ & $\times$ \\
Tsallis/sparsemax ($q>1$) \citep{lee2018sparse} & $\times$ & $\times^\dagger$ & $\times$ & $\times$ \\
\bottomrule
\end{tabular}
\end{center}
\vspace{-0.3em}
{\small $^*$Both properties hold for independent location-family posteriors with a common full-support noise distribution (e.g., equal-variance Gaussians), but neither is guaranteed in general.\\ $^\dagger$Holds on the active support; fails globally (excluded actions with distinct Q-values receive equal probability).}
\end{table}

\textbf{Example: Thompson sampling violates IIA.}\quad
Consider three actions with independent discrete posteriors:
$\mu_a \in \{2, 5\}$ each with probability $\frac{1}{2}$,
$\mu_b \in \{0, 4\}$ each with probability $\frac{1}{2}$, and
$\mu_c = 3$ (deterministic). In menu $\{a, b\}$, $a$ wins in three of four equiprobable outcomes: $\pi(a)/\pi(b) = 3$. In menu $\{a, b, c\}$:
$(2,0,3) \to c$, $(2,4,3) \to b$, $(5,0,3) \to a$, $(5,4,3) \to a$, giving $\pi(a)/\pi(b) = 2 \neq 3$.

\subsection{Visualizing IIA across the Tsallis family}
\label{sec:tsallis}

\begin{wrapfigure}{r}{0.4\textwidth}
\vspace{-1.4em}
\centering
\includegraphics[width=0.39\textwidth]{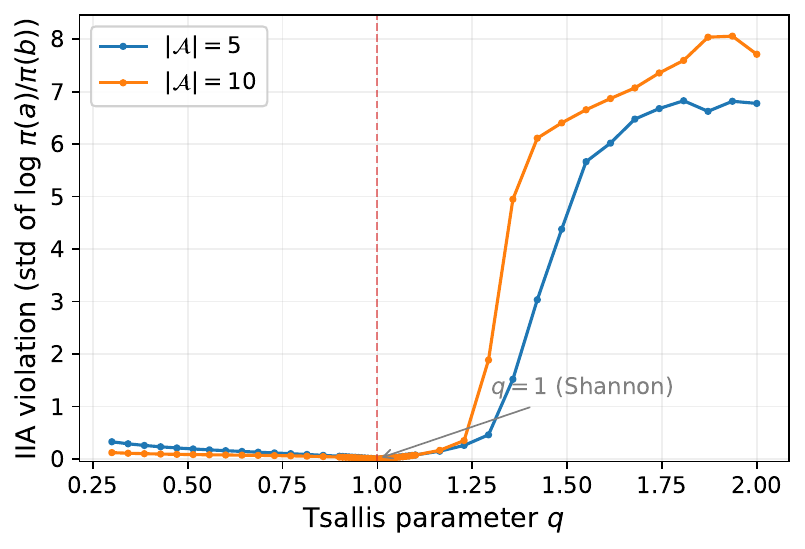}
\vspace{-2em}
\end{wrapfigure}

\looseness=-1 Step~2 of \cref{thm:boltzmann} establishes that the value function is uniquely determined by IIA and monotonicity, with Shannon entropy emerging as the unique consistent bonus. The figure quantifies IIA violations along the Tsallis entropy family $H_q(\pi) = \frac{1}{q-1}(1 - \sum_a \pi_a^q)$, with Shannon entropy at $q \to 1$ and sparsemax at $q = 2$ \citep{lee2018sparse}. For each $q$, we fix $Q_a=1$ and $Q_b=0$, draw the remaining Q-values 500 times from $\mathrm{U}[-3,3]$, and report the standard deviation of $\log(\pi(a)/\pi(b))$ at $\beta=1$ over draws in which both actions remain in the support. Only $q = 1$ achieves zero violation. For $q > 1$, compact support makes $\log(\pi(a)/\pi(b))$ diverge as either action approaches exclusion. This demonstrates that sparsemax and other Tsallis alternatives are not minor departures from IIA but qualitatively different choice rules.

\section{Discussion}
\label{sec:discussion}

\subsection{Cross-disciplinary foundations}
\label{sec:convergence}

\begin{table}[!t]
\caption{Cross-disciplinary derivations and motivations for Boltzmann rationality.\vspace{-1em}}
\label{tab:synthesis}
\makebox[\textwidth][c]{%
\footnotesize
\begin{tabular}{lll}
\toprule
\textbf{Tradition} & \textbf{Key Axiom / Assumption} & \textbf{Yields} \\
\midrule
Choice theory \citep{luce1959individual} & IIA & Boltzmann policy (static) \\
Random utility \citep{yellott1977relationship} & RUM + IIA & Gumbel $\Leftrightarrow$ Boltzmann \\
Information theory \citep{shore1980axiomatic} & Consistent inference & MaxEnt $\to$ Boltzmann \\
Bounded rationality \citep{ortega2013thermodynamics} & KL info cost & Free energy $\to$ Boltzmann \\
Rational inattention \citep{matejka2015rational} & Shannon MI cost & Boltzmann from MI cost \\
Perturbed utility \citep{fudenberg2015stochastic} & APU + IIA & Entropy cost $\Leftrightarrow$ Boltzmann \\
\midrule
Robustness \citep{eysenbach2022maximum} & Adversarial perturbation & Regularization $\to$ robustness \\
Compositionality \citep{haarnoja2018composable} & Exponential form & Skill composition via Q-addition \\
Econometrics \citep{rust1987optimal} & Gumbel shocks + Bellman & Soft Bellman via inclusive value \\
Online learning \citep{auer2002nonstochastic} & Minimax regret & Exponential weights \\
\midrule
\textbf{This paper (\cref{thm:boltzmann})} & \textbf{VNM + IIA + Monotonicity} & \textbf{Boltzmann policy + soft Bellman} \\
\bottomrule
\end{tabular}%
}
\end{table}

\Cref{tab:synthesis} summarizes several independent derivations of and motivations for the Boltzmann structure.
The first six rows provide independent axiomatic derivations, with IIA as the common thread linking choice theory, random utility, and perturbed utility; information theory and rational inattention arrive at the same form through entropy-maximization and information-cost principles. Robustness, compositionality, and online learning provide practical motivation but not uniqueness: each applies to broader families of regularizers or policies. The econometric tradition independently derives the soft Bellman equation via Gumbel shocks, predating its use in RL by decades. The RL literature has largely developed separately from these foundations, adopting the softmax form without a normative justification.

\vspace{-0.5\baselineskip}
\subsection{Normative assessment}
\label{sec:normative}

\textbf{When to use Boltzmann.} Multiple independent axiom systems single out the Boltzmann form. In random utility theory, \citet{yellott1977relationship} shows that, for additive i.i.d.\ noise and three or more alternatives, IIA holds if and only if the unobserved value noise is Gumbel-distributed. Since the Gumbel distribution generates the softmax, it follows that any IIA-consistent noise model in this class produces Boltzmann choice probabilities. In information theory, the Shore--Johnson axioms uniquely justify KL divergence as the consistent inference functional \citep{shore1980axiomatic}, and \citet{matejka2015rational} show that Shannon mutual-information cost yields the multinomial logit.

That these independent lines of argument---perceptual noise, information cost, and the axiomatic characterization of the present paper---converge on the same functional form through different axiom systems is evidence of the Boltzmann form's structural importance. IIA also offers a concrete engineering virtue: actions can be added to or removed from the action space without recalibrating existing relative preferences. This modularity is valuable for compositional or evolving agent architectures. If VNM preferences, IIA, and monotonicity are all appropriate for the domain, then Boltzmann is not merely a convenient choice---it is the \emph{unique} consistent policy, and the design problem reduces to choosing $\beta$. A practical diagnostic: fix two actions $(a,b)$ and check whether $\log(\pi(a|s)/\pi(b|s))$ varies when other actions are added or removed; under IIA, this ratio is invariant.

\textbf{When not to use Boltzmann.} IIA's main failure mode is the
\emph{similarity effect} \citep{debreu1960individual}, popularized as the ``red bus / blue bus'' problem. Consider an LLM choosing among 11 candidate responses with \emph{equal} Q-values: 10 paraphrases of one response, and 1 semantically distinct response. Under IIA, each paraphrase is treated as an independent alternative, so the Boltzmann policy assigns
$\approx$91\% total probability to paraphrases and only ${\approx}9\%$ to the distinct response, a distortion produced by IIA together with treating surface-distinct responses as independent alternatives \citep{holtzman2021surface}. Whenever the action space has correlated structure, IIA wastes probability mass on near-duplicates. The nested logit model \citep{mcfadden1978modeling} addresses this by partitioning actions into nests $\mathcal{N}_1, \ldots, \mathcal{N}_K$. Within each nest, $\pi(a \mid \mathcal{N}_k) \propto \exp(\beta_k Q_a)$, so IIA holds locally with nest-specific temperature $\beta_k$. Nest selection follows $\pi(\mathcal{N}_k) \propto \exp(\mu \cdot \tfrac{1}{\beta_k}\text{LSE}(\beta_k Q_{\mathcal{N}_k}))$, where $\mu$ governs cross-nest substitution. When $\mu = \beta_k$ for all $k$, this reduces to standard Boltzmann; when $\mu < \beta_k$, similar actions within a nest cannibalize each other's probability rather than drawing from dissimilar alternatives.

\textbf{KL regularization and RLHF.} In practice, the most common formulation is $\max_\pi \sum_a \pi_a Q_a - \frac{1}{\beta}D_{\text{KL}}(\pi \| \pi_0)$, yielding $\pi(a) \propto \pi_0(a)\exp(\beta Q_a)$---a KL penalty from a reference policy $\pi_0$ rather than a pure entropy bonus. (In the pairwise comparisons typical of RLHF preference elicitation, $\vert\mathcal{A}\vert = 2$ and IIA does not force the exponential form; see \cref{sec:limitations}.) This is the form used in RLHF \citep{ouyang2022training}, rational inattention \citep{matejka2015rational}, and linearly-solvable MDPs \citep{todorov2006linearly}. The corresponding axiom would be a version of \emph{generalized IIA}: $\log(\pi(a)/\pi(b)) = \beta(q_a - q_b) + \log(\pi_0(a)/\pi_0(b))$, which is Luce's choice axiom with scale values $v(a) = \pi_0(a)e^{\beta q_a}$.

\citet{xu2023rlhf} show that IIA induces perverse incentives in RLHF: a reward model trained under the Bradley-Terry assumption (which assumes IIA) systematically misranks responses when one category has many near-duplicates in the candidate set. The effect is compounded when preferences are aggregated across heterogeneous annotators, where Arrow-type impossibilities arise \citep{maura2025jackpot}. The nested logit remedy---grouping semantically similar responses into nests while preserving IIA within nests---applies regardless of how preferences are elicited. Developing RLHF with nested logit or cross-nested preferences is (to our knowledge) an open direction.

\textbf{$\beta$-reward scale duality.} No axiomatization determines $\beta$. Scaling rewards by $\alpha$ scales Q-values by $\alpha$, so $\pi^\beta_R \equiv \pi^{\beta/\alpha}_{\alpha R}$ because $(\beta/\alpha)(\alpha Q)=\beta Q$. Consequently, choice data identify only the product of inverse temperature and reward scale. In RLHF, the learned reward scale $\alpha$ and the KL coefficient $1/\beta$ therefore matter only through their ratio, $\alpha/(1/\beta)=\alpha\beta$. Tuning both separately is redundant: a grid search over $(\beta,\alpha)$ reduces without loss to a one-dimensional sweep over $\alpha\beta$ \citep[cf.][Eq.~4, which uses $\beta$ for the KL coefficient]{rafailov2023direct}. More broadly, any claim about the ``right'' temperature is vacuous without fixing the reward scale.

\vspace{-0.5\baselineskip}
\subsection{Limitations and open questions}
\label{sec:limitations}

\textbf{What does the theorem rationalize?} Like the VNM Expected Utility representation in \cref{ax:vnm}, \cref{thm:boltzmann} characterizes a functional form through qualitative axioms. It addresses the question raised by \citet[Remark~5.3]{pitis2023consistent} by replacing ``why softmax and the entropy bonus?'' with ``why Policy and Value IIA?'' Its normative force therefore depends on how compelling these axioms are. While they precisely relate choice probabilities to option values, they do not explain why choice itself is valuable (\cref{sec:motivation} takes that value as a premise). Nor do familiar motivations for stochasticity based on partial observability, exploration, or robustness determine softmax, the entropy bonus, or $\beta$.

One possible path starts from the hard Bellman maximum. Under suitable independence and tail conditions, repeated maximization of noisy value estimates may produce approximately Gumbel-distributed errors \citep{garg2023extreme}. Independent additive Gumbel errors would in turn yield softmax choice and, up to an additive constant, the LSE expected maximum \citep{yellott1977relationship,mcfadden1981econometric}. Making this argument exact in a sequential setting would require justifying the noise model and its propagation through Bellman recursion. A fuller account would derive the precise policy--value coupling, including its scale relative to rewards from a normatively compelling model.

\textbf{Binary actions.} The axiomatization requires $\vert\mathcal{A}\vert \geq 3$: with two actions, any monotone function of $q_a - q_b$ satisfies IIA, so the exponential form is not uniquely determined. In binary settings (e.g., pairwise preference elicitation in RLHF), softmax remains natural but is not axiomatically forced by IIA alone. Similarly, Value IIA does not force a unique value function with two actions, as the three-action functional equation \eqref{eq:star} is needed for uniqueness in Step 2 of the proof of \cref{thm:boltzmann}.

\textbf{Continuous action spaces.}\looseness=-1{} The characterization might be extended to continuous $\mathcal{A} \subseteq \mathbb{R}^d$ with a reference measure $\mu$: IIA would become a density-ratio condition, and the Cauchy argument in Step~1 is dimension-free, yielding $\pi(a \mid s) \propto \exp({\beta Q(s,a)})$ relative to $\mu$. Steps 2 and 3 might also be extended with differential entropy. The choice of $\mu$, however, is an additional modeling input---Soft Actor-Critic \citep{haarnoja2018soft} uses Lebesgue measure in its chosen action coordinates. Given $\mu$, IIA requires the $\mu$-density ratio $\pi(a)/\pi(b)$ to depend only on $Q(a) - Q(b)$.

\textbf{Static vs.\ dynamic IIA.}\looseness=-1{} We apply IIA state-by-state, with VNM and dynamic consistency providing the intertemporal structure. A natural stronger condition is \emph{dynamic IIA}: the relative probability of two trajectories should not depend on what other trajectories are available. Whether static IIA implies dynamic IIA and whether additional conditions such as the recursivity axiom of \citet{fudenberg2015dynamic} are needed are open questions.

\vspace{-0.5\baselineskip}
\subsection{Conclusion}

\looseness=-1 IIA and monotonicity, applied to both policy and value within the MDP framework, jointly determine the Boltzmann policy, Shannon entropy bonus, and soft Bellman equation. The key modeling decision---restricting VNM Independence to base prospects while imposing IIA and monotonicity at choice nodes---separates environmental chance from agent choice, providing a first-principles account of the default functional form. The characterization also makes explicit when the Boltzmann form should \emph{not} be used: in the presence of correlated actions that invalidate IIA.

\subsubsection*{Acknowledgments}
The author thanks the anonymous reviewers and area chair for their careful reading and constructive suggestions. AI assistants (Anthropic's Claude, Google's Gemini, and OpenAI's GPT) assisted with literature review, brainstorming, technical discussion, proofs, draft critique, and editing. The author verified all results and takes full responsibility for the content. This work was supported in part by a CIFAR AI Safety Fellowship and an OpenAI Superalignment Fast Grant.

\bibliography{main}
\bibliographystyle{rlj}
\end{document}